\newtheorem{theorem}{Proposition}
\newcommand{\bx}{\mathbf{x}}
\newcommand{\bw}{\mathbf{w}}
\newcommand{\bft}{\mathbf{f}}
\definecolor{sh_gray}{rgb}{0.84,0.84,0.84}
\definecolor{sh_gray2}{rgb}{1,0.89,0.75}
\definecolor{color3}{rgb}{0.95,0.95,0.95}
\definecolor{color4}{rgb}{0.96,0.96,0.86}
\newcommand{\best}[1]{\colorbox{sh_gray2}{\textbf{#1}}}%
\newcommand{\sbest}[1]{\colorbox{sh_gray}{\textbf{#1}}}%
\ifcvprfinal\pagestyle{empty}\fi
\begin{document}

%%%%%%%%% TITLE
% \title{AffinityLoss: An Integrated Objective for Max-margin \\ Class Imbalanced Learning}
\title{Max-margin Class Imbalanced Learning with Gaussian Affinity}

\author{Munawar Hayat$^{1,2}$, Salman Khan$^{1,3}$, Waqas Zamir$^{1}$, Jianbing Shen$^{1,4}$, Ling Shao$^1$\\
$^1$Inception Institute of Artificial Intelligence, $^2$University of Canberra,\\ $^3$Australian National University, $^4$Beijing Institute of Technology\\
{\tt\small {firstname.lastname}@inceptioniai.org}
}

\maketitle
%\thispagestyle{empty}

%%%%%%%%% ABSTRACT
\begin{abstract}

   Real-world object classes appear in imbalanced ratios. This poses a significant challenge for classifiers which get biased towards frequent classes. We hypothesize that improving the generalization capability of a classifier should improve learning on imbalanced datasets. Here, we introduce the first hybrid loss function that jointly performs classification and clustering in a single formulation. Our approach is based on an `affinity measure' in Euclidean space that leads to the following benefits: (1) direct enforcement  of maximum margin constraints on classification boundaries, (2) a tractable way to ensure uniformly spaced and equidistant cluster centers, (3) flexibility to learn multiple class prototypes to support diversity and discriminability in feature space. Our extensive experiments demonstrate the significant performance improvements on visual classification and verification tasks on multiple imbalanced datasets. The proposed loss can easily be plugged in any deep architecture as a differentiable block and demonstrates robustness against different levels of data imbalance and corrupted labels. 
   
\end{abstract}

%%%%%%%%% BODY TEXT
\section{Introduction}
Deep neural networks are data hungry in nature and require large amounts of data for successful training. For imbalanced datasets, where several (potentially important) classes have a scarce representation, the learned models are biased towards highly abundant classes. This is because the scarce classes have less representations during training which results in a mismatch between the joint distribution model for training $p(x,y)$ and test sets $p(x',y')$. This leads to lower recall rates for rare classes, which are otherwise critically desirable in numerous scenarios. As an example, a malignant lesion is rare compared to benign ones, but should not be miss-classified. 

\begin{figure}
    \centering
    \includegraphics[width=\linewidth,keepaspectratio=True]{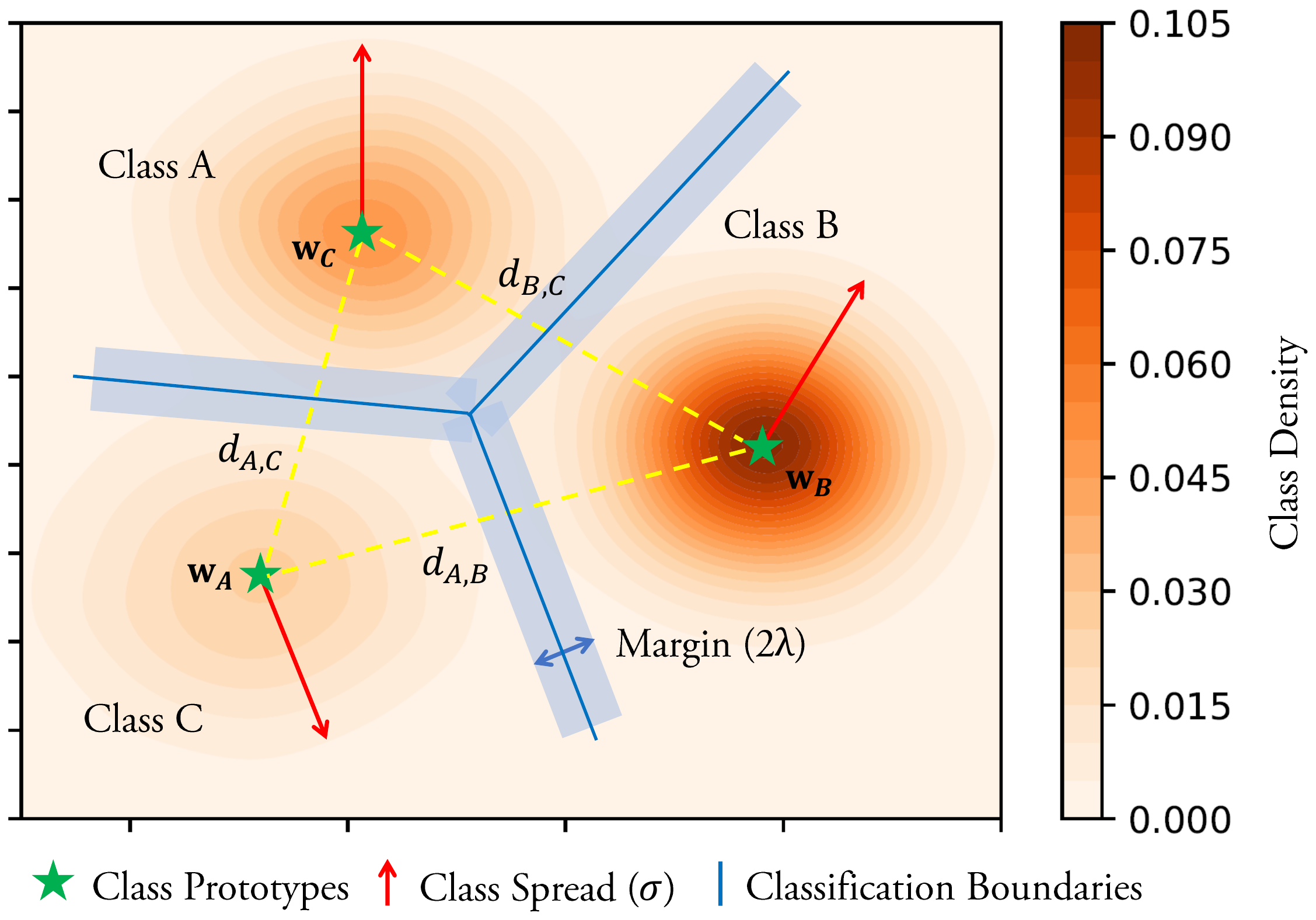}
    \caption{Affinity Loss integrates classification and clustering in a single objective. It's flexible formulation in Euclidean space allows enforcing margin between classes, control over learned clusters, number of class-prototypes and the distance between class-prototypes. Such max-margin learning greatly helps in overcoming class imbalance by learning balanced classification regions and generalizable class boundaries. }
    \label{fig:intro_fig}
\end{figure}

The soft-max loss is a popular choice for  conventional recognition tasks. However, through extensive experiments, we show that it is less suitable to handle mismatch between train and test distributions. This is partly due to no direct enforcement of margins in the classification space and the lack of a principled approach to control intra-class variations and inter-class separation. 
Here, we propose that max-margin learning can improve generalization which can help mitigate classifier bias towards more frequent classes by learning balanced representations for all classes. Remarkably, some recent efforts focus on introducing max-margin constraints within the soft-max loss function \cite{deng2018arcface, liu2016large, liu2017sphereface}. Since soft-max loss computes similarities in the angular domain (vector dot-product or cosine similarity), direct enforcement of angular margins is ill-posed and existing works either  involve approximations or make restricting assumptions (e.g., points lying on a hypersphere). % ill-posed or intractable

In this paper, we propose a novel loss formulation that enhances generalization by jointly reducing intra-class variations and maximizing inter-class distances. A notable difference from the previous works is the automatic learning of class representative prototypes in the Euclidean space with inherent flexibility to enforce certain geometric constraints on the learned prototypes. This is in contrast to soft-max loss where more abundant classes tend to occupy additional space in the projected feature space and rare classes get a skewed representation. The proposed objective is named the `Affinity loss function' as it is based on a Gaussian similarity metric defined in terms of Bergman divergence. The proposed loss formulation learns to map input images to a highly discriminative Euclidean space where the distance with class representative prototypes provides a direct similarity measure for each class. The class prototypes are key points in the  embedding space around which feature points are clustered  \cite{snell2017prototypical}. 

The affinity loss function promotes the classifier to have a simpler, balanced and more generalizable inductive bias during training. 
% because explicit inter-class margin maximization and intra-class distance minimization is promoted 
The proposed loss function thus provides the following advantages:
\begin{itemize}\setlength{\itemsep}{0pt}
\item An inherent mechanism to jointly cluster and classify feature vectors in the Euclidean space.

\item A tractable way to ensure uniformly spaced and equidistant class prototypes (when embedding dimension $d$ and prototype number $n$ are related as: $n < d+1$). 
%We can enforce constraints on the learned projection vectors to be equidistant. This is possible in the proposed formulation, as we can have $n+1$ equidistant points for an $n$-dimensional projection vector in the Euclidean space.

\item Along-with uniformly spaced prototypes, our formulation ensures that the clusters formed around the prototypes are uniformly shaped (in terms of second order moments).  %Uniform sized clusters for all classes. The parameter $\sigma$ determines the cluster size. Our formulation inherently ensures that all clusters are uniformly spaced and uniformly shaped (same size).

%\item Compared with softmax loss, where majority classes tend to occupy more space in the projected feature space, the proposed formulation ensures that the learned feature space for all classes is uniformly spaced and distributed.

\item The resulting classifier shows robustness against different levels of label noises and imbalances amongst classes.% robust to label noise and adversarial perturbations to inputs. 
\end{itemize}

%\MH{I like the introduction in this paper https://arxiv.org/pdf/1704.08063.pdf  Softmax loss and max-margin constraints in Euclid space do not make much sense, as the features are distributed in Angular space .. softmax with angular margin (s) has been the popular choice.. however, this is also not quite suitable, since the data does not necassirly has to lie on angular manifold; we can not ensure equi-distant constraints in Angular space.}

%In order to address these challenges associated with Softmax loss, individual efforts have been made in the literature. Most of the works first project both the projection projection vectors and feature vectors onto a unit sphere, where it is tractable to enforce constraints such as max-margin. 

The proposed loss function is a differentiable module which is applicable to different network architectures, and complements the commonly deployed regularization techniques including dropout, weight decay and momentum. Through extensive evaluations on a number of datasets, we demonstrate that it achieves a highly balanced and generalizable classifier, leading to significant improvements over previous techniques.

\section{Related Work}

\textbf{Class-imbalanced Learning:}
Imbalanced datasets exhibit complex characteristics and learning from such data requires designing new techniques and paradigms. The existing class imbalance approaches can be divided into two main categories, \textbf{1)} data-level, and \textbf{2)} algorithm-level approaches. The data-level schemes modify the distribution of data e.g., by oversampling the minority classes \cite{shen2016relay,chawla2002smote, han2005borderline,he2009learning, jo2004class} or undersampling the majority classes \cite{kubat1997addressing,barandela2003restricted}. Such approaches are usually susceptible to redundancy and over-fitting (for over-sampling) and critical information loss (for under-sampling). In comparison, the algorithm level approaches improve the classifier itself e.g., through cost-sensitive learning. Such methods incorporate prior knowledge about classes based upon their significance or representation in the training data \cite{lawrence2012neural,richard1991neural, khan2017cost}. These methods have been applied to different classifiers including SVMs \cite{tang2009svms}, decision trees \cite{zhou2010multi} and boosting \cite{ting2000comparative}. Some works further explore ensemble of cost-sensitive classifiers to tackle imbalance \cite{huang2018discriminative,KRAWCZYK2014554}. A major challenge associated with these cost-sensitive methods is that the class-specific costs are only defined at the beginning, and they lack mechanisms to dynamically update the costs during the course of training.%are usually , and no mechanism exists to  these costs .%  to determine the appropriate costs for different classes, specially for large-scale data \cite{huang2016learning}.

\textbf{Deep Imbalanced Learning:} Some recent attempts have been made to learn deep models from imbalanced data \cite{ jeatrakul2010classification,khan2017cost, castro2013novel,wang2016training,ng2016dual}. For example, the method in \cite{jeatrakul2010classification} first learns to under sample the training data using a neural network, followed by Synthetic Minority Oversampling TEchnique (SMOTE) based technique to re-balance the data. Deep models are trained to directly optimize the imbalanced classification accuracy in \cite{wang2016training,ng2016dual}. Wang et.~al. \cite{wang2017learning} propose a meta learning approach to progressively transfer the model parameters from majority towards less-frequent classes. Some works \cite{khan2017cost, castro2013novel} train cost sensitive deep networks, which alternatively optimize class costs and network weights. Continually determining class costs while training a deep model is still an open and challenging research problem, and makes optimization intractable in learning from large scale datasets \cite{huang2016learning}.

\textbf{Joint Loss Formulation:} Popular loss functions used for classification in deep networks include hinge loss, soft-max loss, Euclidean loss and contrastive loss \cite{khan2018guide}. A triplet loss could simultaneously perform recognition and clustering, however its training is prohibitive due to huge number of triplet combinations on large-scale datasets \cite{schroff2015facenet}. Since these loss functions are limited in their capability to achieve discriminability in feature space, recent literature explores the combination of multiple loss functions. To this end, \cite{sun2014deep} showed that the combination of soft-max and contrastive losses concurrently enforce intra-class compactness and inter-class separability. On a similar line,  \cite{wen2016discriminative} proposed `center loss' that uses separate objectives for classification and clustering. 

\textbf{Max-margin Learning:} Margin-maximizing learning objectives have been traditionally used in machine learning. Hinge loss in Support vector machines is one of the pioneering max-margin learning framework \cite{hearst1998support}. Some recent works aim to integrate max-margin learning with cross-entropy loss function. Among these, Large-margin soft-max \cite{liu2016large} enforces inter-class separability directly on the dot-product similarity while SphereFace \cite{liu2017sphereface}  and ArcFace \cite{deng2018arcface} enforce multiplicative and additive angular margins on the hypersphere manifold, respectively. The hypersphere assumption for feature space makes the resulting loss less generalizable to applications other than face recognition. Furthermore, enforcing margin based separation in angular domain is an ill-posed problem and either requires approximations or assumptions (e.g., unit sphere) \cite{elsayed2018large}. THis paper proposes a new flexible loss function which simultaneously performs clustering and classification, and enables direct enforcement of the max-margin constraints. We describe the proposed loss formulation next.
\section{Max-margin Framework}
We propose a hybrid multi-task formulation to perform learning on imbalanced datasets. The proposed formulation combines classification and clustering in a single objective that minimizes intra-class variations while simultaneously achieving maximal inter-class separation. We first explain why traditional Soft-max Loss (SL) is unsuitable for large-margin learning and then introduce our novel objective function.

\subsection{Soft-max Loss}
Given an input-output pair $\{\bx_i, y_i\}$, a deep neural network transforms input to a feature space representation $\bft_i$ using a function $\mathcal{F}$ parameterized by $\theta$ i.e.,  $\bft = \mathcal{F}(\bx; \theta)$. The soft-max loss can then compute the discrepancy between prediction and ground-truth in the label space as follows:
\begin{align} \label{eq:softmax}
L_{sm} = \frac{1}{N}\sum\limits_i -\log \Big( \frac{\exp(\bw^T_{y_i}\bft_i)}{\sum_j \exp( \bw^T_{j}\bft_i)} \Big),
\end{align}
where $i \in [1,N],\; j \in [1,C]$, $N$ and $C$ are number of training examples and classes respectively. It is worth noting that we have included the last fully connected layer in the definition of soft-max loss  which will be useful in further analysis. Also, for the sake of brevity, we do not mention unit biases in Eq.~\ref{eq:softmax}.

Although soft-max loss is one of the most popular choices for multi-class classification, in the following discussion, we argue that it is not suitable for class imbalanced learning due to several limitations.

%\todo{check more theoretical insights from this paper: Class Imbalance Redux} %https://ieeexplore.ieee.org/stamp/stamp.jsp?tp=&arnumber=6137280

\paragraph{Limitations of SL:}
The loss function in Eq.~\ref{eq:softmax} computes inner vector product $\langle \bw, \bft \rangle$ which measures the projection of feature representation on to each of the class vectors $\bw_j$. The goal is to perfectly align $\bft_i$ with the correct class vector $\bw_{y_i}$ such that the data likelihood is maximized. Due to the reliance of the oft-max loss on vector dot-product, it has the following limitations: 
\begin{itemize}\setlength{\itemsep}{0em}
\item No inherent mechanism to ensure max margin constraints. Computation of inter-class margin for soft-max loss is intractable \cite{elsayed2018large}. Large margin constraints promote better generalization in imbalanced distributions and robustness against input perturbations \cite{cortes1995support}. 
\item The learned projection vectors are not necessarily equi-spaced in the classification space. That is, ideally the angle between closest projection vectors should be equal (e.g., $\frac{2\pi}{k}$ in 2D where $k$ is the number of classes). However, in practice the projection vectors for majority classes occupy more angular space compared with minority classes. This has been visualized in Fig.~\ref{fig: vis} on imbalanced MNIST dataset, and leads to poor generalization to test samples.
\item The length $\parallel \textbf{w}_j \parallel_2$ of the learnt projection vectors for different classes is not necessarily the same. It has been shown in the literature that the minority class projection vectors are weaker (i.e., with less magnitude) compared with the majority classes \cite{liu2016large}. Cost-sensitive learning which artificially augments the magnitude of the minority class projection vectors has been shown to be effective for imbalance learning \cite{khan2017cost}.
\end{itemize}

\paragraph{Unsuitability of SL for Imbalanced Learning:}
We attribute the above limitations to not directly enforcing the max-margin constraints on the classification boundaries. Consider the definition of soft-max loss (Eq.~\ref{eq:softmax}) in terms of dot-products $\bw^T\bft$, we can simplify the expression as follows:
\begin{align}
L^i_{sm} &\propto  \sum_{j\neq y_i} \exp(\bw_j^T \bft_i - \bw_{y_i}^T\bft_i)
\end{align} 
The decision boundary for a class pair $\{j,k\}$ is given by the case where $\bw_j^T \mathcal{F}(\bx) = \bw_k^T \mathcal{F}(\bx)$, i.e., the class boundaries are shared between the pair of classes. Further, minimization of $L_{sm}^i$ requires $\bw_j^T \mathcal{F}(\bx) > \bw_k^T \mathcal{F}(\bx) : k \neq j$ for correct class assignment to $\bx$. This is a `relative constraint' and therefore the soft-max loss $L_{sm}$ does not necessarily: (a) reduces intra-class variations, (b) enforces a margin between each class pair.  To address these issues, we propose our new loss function next.

%\todo{check for any derivation in terms of less weight by soft-max on less frequent classes or a toy example like fig 1 in  the link in comments}
%http://home.ijasca.com/data/documents/13IJASCA-070301_Pg176-204_Classification-with-class-imbalance-problem_A-Review.pdf
%\todo{A visualization in 2D space with different 2,3,5,10 classes (or may be single case with different initializations), where soft-max learns different (unequal) sections in space for different runs.}

\begin{figure*}[!ht]
\includegraphics[trim={16cm 19cm 7cm 11cm},clip,width=1\linewidth,keepaspectratio=True]{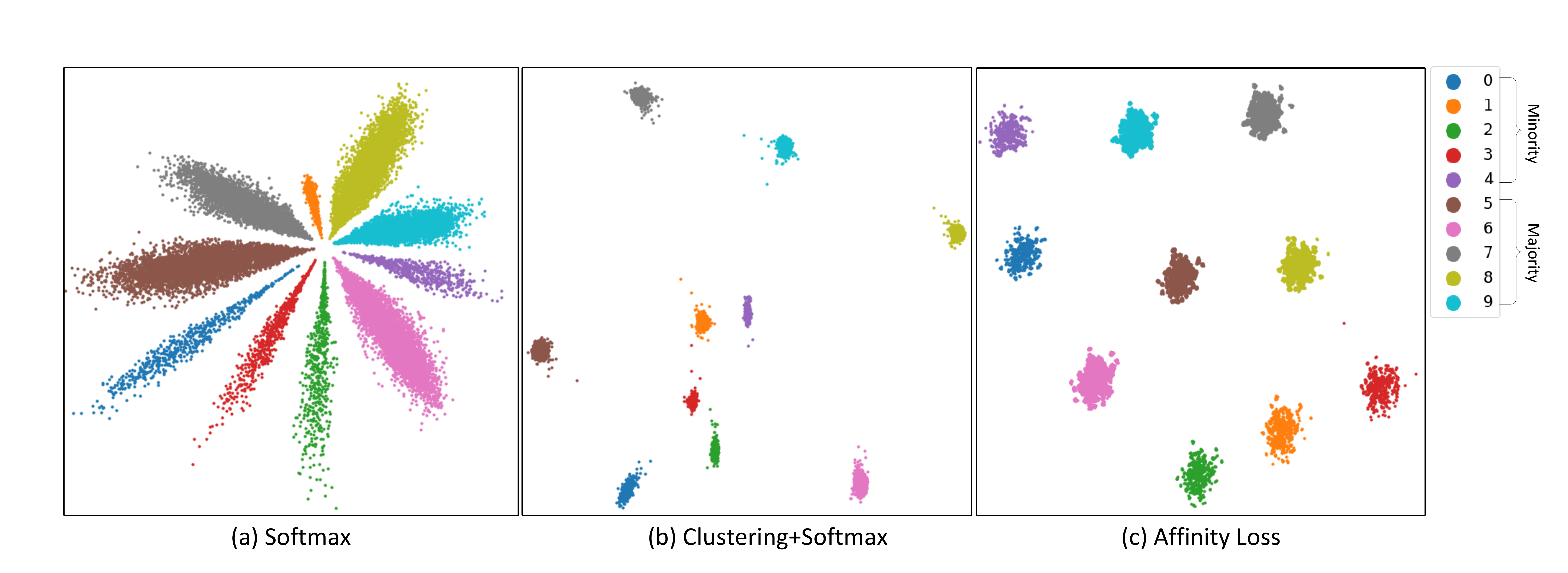}
\caption{2D feature space projections in terms of penultimate layer activations. The model is trained on imbalanced MNIST data (by retaining only 10\% of the samples for digits 0-4) using different losses: (a) soft-max loss learns floral petals in angular space, note that the minority class feature vectors are weaker (shorter in length) and occupy less angular space. (b) center loss reduces intra-class variations by performing clustering. However, the minority class vectors tend to be congested near the center and are confused amongst each other (c) the proposed affinity loss learns equi-spaced clusters of uniform shapes for both majority and minority classes.}
\label{fig: vis}
\end{figure*}

\subsection{Max-margin Learning with Hybrid Objective}

\noindent
\textbf{Euclidean space similarity measure:} Instead of computing similarities with class prototypes using vector dot-product, we propose to measure class similarities for an input feature in the Euclidean space using a Gaussian similarity measure in terms of Bergman divergence (squared $\ell^2$ distance): 
\begin{align}\label{eq:gaussian}
   d(\textbf{f}_i, \textbf{w}_j) = \exp\Big(-\frac{\parallel\textbf{f}_i-\textbf{w}_j\parallel^2}{\sigma}\Big),
\end{align}
where, $\sigma$ denotes a weighting parameter.  This provides us: \textbf{(a)} the flexibility to directly enforce margin maximizing constraints, \textbf{(b)} have equi-spaced classification boundaries for multiple classes, \textbf{(c)} control the variance of learned clusters and therefore enhancing intra-class compactness, \textbf{(d)} the freedom to use standard distance measures in Euclidean domain to measure similarity and most importantly \textbf{(e)} simultaneous classification and clustering in a single objective function.  

%\SK{Why euclidean distance based similarity measure is better than dot-product which is projection based measure. Any theoretical insights?}

\begin{theorem}
The similarity function $d(\mathbf{a}, \mathbf{b})$ is a valid similarity metric for any real-valued inputs.
\end{theorem}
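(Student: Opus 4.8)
The plan is to verify that $d(\mathbf{a},\mathbf{b}) = \exp\big(-\|\mathbf{a}-\mathbf{b}\|^2/\sigma\big)$, with $\sigma>0$, satisfies the properties one asks of a similarity metric: symmetry, a bounded range with strictly maximal self-similarity, the identity of indiscernibles, and positive semi-definiteness as a kernel. The first three I would dispatch immediately. Symmetry is $\|\mathbf{a}-\mathbf{b}\|^2 = \|\mathbf{b}-\mathbf{a}\|^2$. Since $\|\mathbf{a}-\mathbf{b}\|^2 \ge 0$ and $\sigma>0$, the exponent is non-positive, so $d(\mathbf{a},\mathbf{b}) \in (0,1]$; furthermore $d(\mathbf{a},\mathbf{a})=1$, and $d(\mathbf{a},\mathbf{b})=1$ if and only if $\mathbf{a}=\mathbf{b}$, so self-similarity is uniquely and maximally attained. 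Monotone decay of $d$ in the underlying Euclidean distance is also transparent, and since $\sqrt{-\sigma\log d(\mathbf{a},\mathbf{b})} = \|\mathbf{a}-\mathbf{b}\|$, a bona fide metric is recovered from $d$, which justifies the ``metric'' terminology.

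The substantive step --- and the one I expect to be the main obstacle --- is showing positive definiteness: for every finite set $\{\mathbf{x}_1,\dots,\mathbf{x}_m\}\subset\mathbb{R}^d$ and all reals $c_1,\dots,c_m$, the Gram matrix $[\,d(\mathbf{x}_k,\mathbf{x}_l)\,]$ is PSD, i.e.\ $\sum_{k,l} c_k c_l\, d(\mathbf{x}_k,\mathbf{x}_l)\ge 0$. I would use the classical RBF-kernel argument based on the factorization
\begin{align}
d(\mathbf{a},\mathbf{b}) = \exp\Big(-\frac{\|\mathbf{a}\|^2}{\sigma}\Big)\,\exp\Big(-\frac{\|\mathbf{b}\|^2}{\sigma}\Big)\,\exp\Big(\frac{2\langle\mathbf{a},\mathbf{b}\rangle}{\sigma}\Big).
\end{align}
The linear kernel $\langle\mathbf{a},\mathbf{b}\rangle$ is PSD (a Gram matrix); by the Schur product theorem each Hadamard power $\langle\mathbf{a},\mathbf{b}\rangle^{n}$ is PSD, hence so is $\exp(2\langle\mathbf{a},\mathbf{b}\rangle/\sigma)=\sum_{n\ge 0}\frac{1}{n!}(2/\sigma)^{n}\langle\mathbf{a},\mathbf{b}\rangle^{n}$ as a convergent non-negative combination of PSD matrices; and conjugating by the diagonal matrix $\mathrm{diag}\big(\exp(-\|\mathbf{x}_k\|^2/\sigma)\big)$ preserves PSD. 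This yields the claim.

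A cleaner alternative I would mention is Bochner's theorem: a continuous shift-invariant kernel $d(\mathbf{a},\mathbf{b})=\phi(\mathbf{a}-\mathbf{b})$ is positive definite if and only if $\phi$ is the Fourier transform of a finite non-negative measure; as $\phi(\mathbf{u})=\exp(-\|\mathbf{u}\|^2/\sigma)$ is, up to a positive constant, the Fourier transform of a Gaussian density (which is non-negative), positive definiteness is immediate. Either route, together with the elementary properties above, shows that $d$ is a valid similarity metric on $\mathbb{R}^d\times\mathbb{R}^d$ for every dimension $d$, i.e.\ for arbitrary real-valued inputs.
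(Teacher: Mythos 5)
Your argument is mathematically correct, but it proves a different axiom set than the paper checks, so the comparison is worth spelling out. The paper's proof is essentially a bare assertion: it lists five conditions from \cite{li2004similarity} --- non-negativity, symmetry, equivalence, self-similarity, and a ``triangular similarity'' inequality $d(\mathbf{a},\mathbf{b})+d(\mathbf{b},\mathbf{c})\le d(\mathbf{a},\mathbf{c})+d(\mathbf{b},\mathbf{b})$ --- and states without computation that all of them hold for the Gaussian measure. Your symmetry, range, maximal-self-similarity and identity-of-indiscernibles observations cover the first four, but you replace the fifth condition by positive semi-definiteness of the Gram matrix, proved via the exponential-series/Schur-product factorization (or Bochner's theorem); that is a genuinely different route, and it is the standard, rigorous sense in which the Gaussian RBF kernel is a ``valid'' similarity. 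What your proof does not do is verify the paper's fifth axiom --- and in fact that axiom is false for this $d$: taking collinear scalars $a=-t$, $b=0$, $c=t$ with $t$ chosen so that $e^{-t^2/\sigma}=0.9$ gives $d(a,b)+d(b,c)=1.8$ while $d(a,c)+d(b,b)=0.9^4+1\approx 1.656$, violating the inequality for every $\sigma>0$ (this is the familiar fact that squared Euclidean distance, and hence $1-d$, fails the triangle inequality). So the defect lies in the paper's own axiom list rather than in your reasoning: your PSD-based argument is complete and arguably the more defensible justification of the proposition, but to match the statement as the paper frames it you would need either to adopt the kernel (positive-definiteness) definition of a similarity metric explicitly, or to note that the triangular-similarity condition cannot be satisfied and restrict the claim to the remaining properties.
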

\begin{proof}
The real-valued similarity function $d(\mathbf{a}, \mathbf{b})$ will define a valid similarity metric if it satisfies the following conditions \cite{li2004similarity}:
% On the similarity metric and the distance metric by Chen et al is also a very nice paper. 
\begin{itemize}\setlength{\itemsep}{0em}
    \item Non-negativity: $d(\mathbf{a}, \mathbf{b}) \geq 0$
    \item Symmetry: $d(\mathbf{a}, \mathbf{b}) = d(\mathbf{b}, \mathbf{a})$
    \item Equivalence: $d(\mathbf{a}, \mathbf{a}) = d(\mathbf{b}, \mathbf{b}) =  d(\mathbf{a}, \mathbf{b})$ iff $\mathbf{a} = \mathbf{b}$
    \item Self-similarity: $d(\mathbf{a}, \mathbf{a}) \geq d(\mathbf{a}, \mathbf{b})$ 
    \item Triangular similarity: $d(\mathbf{a}, \mathbf{b}) + d(\mathbf{b}, \mathbf{c}) \leq d(\mathbf{a}, \mathbf{c}) + d(\mathbf{b}, \mathbf{b})$
\end{itemize}
Since, all above conditions are true for $d(\cdot)$, therefore, it is a valid similarity metric. 
\end{proof}

% Figure soruce is here https://docs.google.com/spreadsheets/d/1N6MrYeUNIAwNs00k5qYmdSya81-uK06VL3BCeZz1568/edit#gid=0

\noindent
\textbf{Relation between Dot-product and Gaussian Similarity:} The proposed Gaussian similarity measure is related to the dot-product as follows:
\begin{align}\label{eq:relation}
   d(\textbf{f}_i, \textbf{w}_j) &= \exp\Big(-\frac{\parallel\bft_i\parallel^2 + \parallel\bw_j\parallel^2 - 2  \langle\textbf{w}_j, \bft\rangle}{\sigma}\Big), \\
   \langle\textbf{w}_j, \bft\rangle & = \frac{\sigma \log d(\textbf{f}_i, \textbf{w}_j)  + \parallel\bft_i\parallel^2 + \parallel\bw_j\parallel^2 }{2}
\end{align}
Intuitively, the above relation implies the dependence of soft-max loss on the scale/magnitude of feature vectors and class prototypes. It leads to two conclusios: 
(1)  It can be seen that $d(\textbf{f}_i, \textbf{w}_j)$ is bounded between $[0,1]$ since $\parallel\bft_i\parallel^2 + \parallel\bw_j\parallel^2 \geq 2  \langle\textbf{w}_j, \bft\rangle$, while $\langle\textbf{w}_j, \bft\rangle$ can have large magnitudes. (2) The Gaussian measure can be considered as an inverse chord distance when magnitudes of vectors are normalized to be equal. The dot product in that case is directly proportional to the Gaussian similarity and both similarity measures will behave similarly if no additional constraints are included in our proposed similarity measure. However, the main flexiblity with our formulation is the explicit introduction of margin constraints, which we introduce next. 

%Therefore, the Gaussian similarity is defined in terms of chord distance while the under the limit $\parallel\bft\parallel , \parallel\bw\parallel \propto \infty$, $\langle\textbf{w}_j, \bft\rangle $ grows faster compared to $d(\textbf{f}_i, \textbf{w}_j)$.

\noindent
\textbf{Enforcing margin between classes:}
 Note that some variants of soft-max loss introduce angle based margin constraints \cite{liu2017sphereface,deng2018arcface}, however, the margins in angular domain are computationally expensive and implemented only as approximations due to intractability. Our formulation allows a more direct margin penalty in the loss function. The proposed max margin loss function based on  Eq.~\ref{eq:gaussian} is given by,
\begin{align}\label{eq:maxmarg}
    L_{mm} = \sum_{j} \max \big(0, \lambda + d(\textbf{f}_i, \textbf{w}_j) - d(\textbf{f}_i, \textbf{w}_{y_i})\big) \; : j \neq y_i,
\end{align}
where $d(\textbf{f}_i, \textbf{w}_j)$ is the similarity of the sample with its true class, $d(\textbf{f}_i, \textbf{w}_{y_i})$ is its similarity with other classes, and $\lambda$ is the enforced margin.

\noindent
\textbf{Uniform classification regions:} The soft-max loss does not ensure uniform classification regions for all classes. As a result, undersampled minority classes get a shrinked representation in the feature space compared to more frequent classes. To ensure equi-distant weight vectors, we propose to apply a regularization on the  learned class weights. This regularizer is termed as a `\emph{diversity regularizer}' as it enforces all class centers ($\mathbf{w}$) to be uniformly spread out in the feature space. The diversity regularizer is formally defined as follows:
\begin{align}
R(\mathbf{w}) &= \mathbb{E}[(\parallel\mathbf{w}_j- \mathbf{w}_k\parallel^2 - \mu)^2], \; s.t.\, j < k,\\
\mu &= \frac{2}{C^2-C} \sum\limits_{j<k} \parallel\mathbf{w}_j-\mathbf{w}_k\parallel^2
\end{align}
where $\mu$ is the mean distance between all class prototypes.

\textbf{Multi-centered learning:} For challenging classification problems, the feature space may be partitioned such that all samples belonging to the same class are not co-located in a single region. Therefore, clustering all same class samples with a single prototype (class center) will not be optimal in such cases. To resolve this limitation, we introduce a novel multi-centered learning paradigm based on our max-margin framework. Instead of learning a single projection vector $\mathbf{w}_j$ for each class, the proposed framework enables learning multiple projection vectors $\{\mathbf{w}_{t}\}_j$ per-class. Specifically, we can learn $m$ projection vectors per class, where similarity of a feature vector $\textbf{f}_i$ with a class $j$ is given by:
\begin{align}
   d(\textbf{f}_i, \textbf{w}_j) =\max \big\{\exp\big(-\frac{\parallel\textbf{f}_i-\textbf{w}_{j,t}\parallel^2}{\sigma}\big)\big\}, t=[1, m] 
\end{align}
Max-margin loss is then defined similar to Eq.~\ref{eq:maxmarg} above. The overall loss function therefore becomes:
\begin{align}
    L = L_{mm} + R(\mathbf{w}).
\end{align}
The diversity regularizer for the multi-center case is enforced on the similarity between all $m*C$ prototypes.

\section{Experiments}

\begin{figure}
    \centering
    \includegraphics[width=\columnwidth,keepaspectratio=True,trim=3em 18em 3em 1em, clip=true]{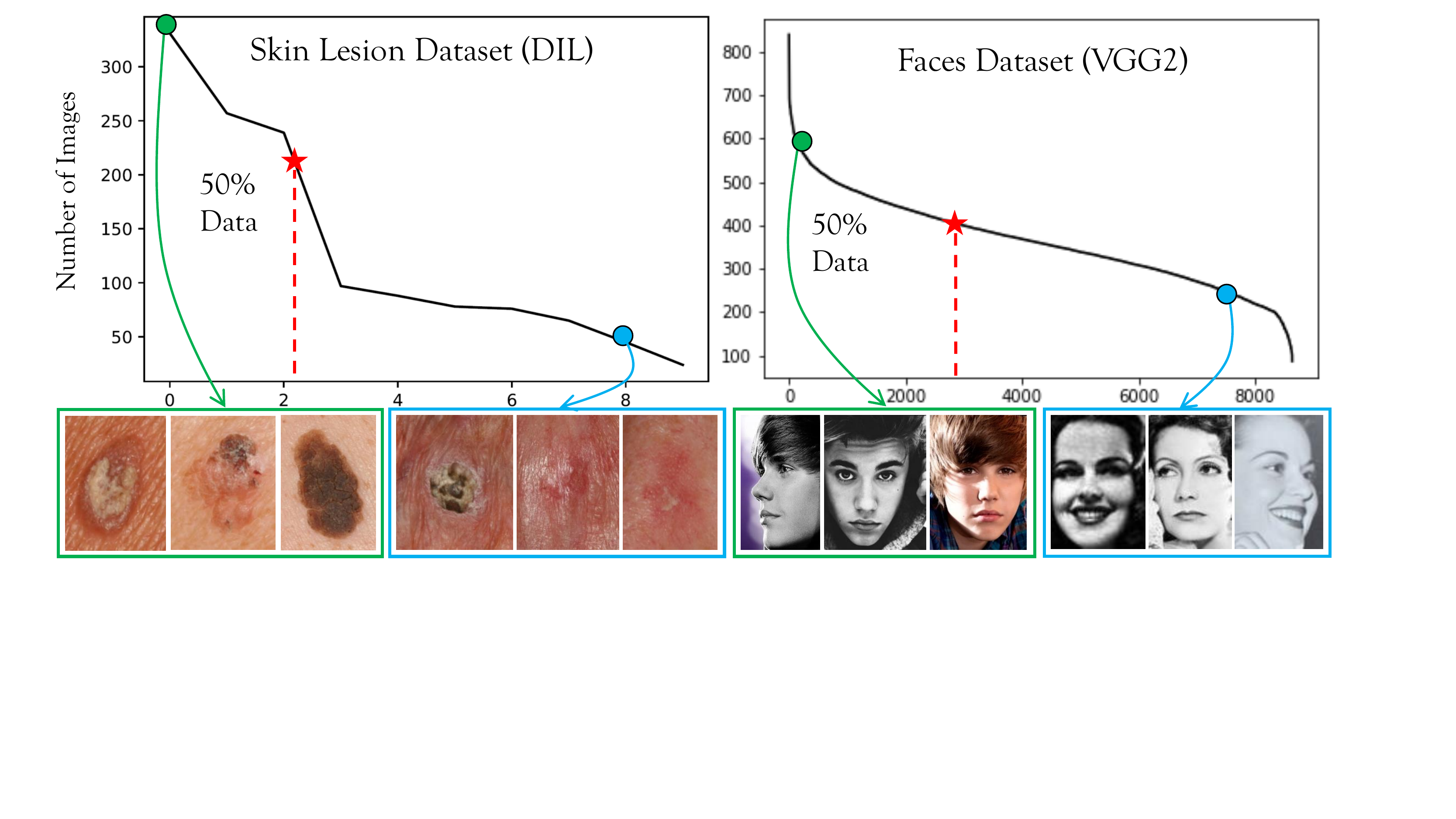}
    \caption{Data Imbalance due to long-tail distribution.}
    \label{fig:data_plot}
\end{figure}

% Figure soruce is here https://docs.google.com/spreadsheets/d/1N6MrYeUNIAwNs00k5qYmdSya81-uK06VL3BCeZz1568/edit#gid=0

To demonstrate the effectiveness of the proposed affinity loss, we perform experiments on datasets which exhibit natural imbalance. These include Dermofit Image Library (DIL) for skin lesion classification and large scale image datasets for facial verification. We further extensively evaluate various components of the proposed approach by systematically generating imbalance and introducing different levels of label noise. Through these empirical evaluations, we provide an evidence of the robustness of the proposed method against different data imbalance levels and noisy training labels. A brief description about the evaluated datasets is presented next.

\subsection{Datasets}
%Class imbalance is naturally observed in all real-world image datasets.

%We further evaluate our approach by systematically generating imbalance in datasets which have equal class representation (MNIST, CIFAR). 

 \noindent \textbf{Skin Melanoma Dataset (DIL):} 
 Edinburgh Dermofit Image Library (DIL) contains $1300$ images belonging to $10$ skin lesion categories including melanomas, seborrhoeic keratosis and basal cell carcinomas. The images are based upon diagnosis from dermatologists and dermatopathologists. The number of images vary amongst categories (between $24$ and $331$, mean $130$, median $83$), and show significant imbalance, with $50\%$ of all images belonging to only top two classes (Fig.~\ref{fig:data_plot}). Similar to \cite{ballerini2013color}, we perform two experiments, considering five and ten class splits respectively, and report results for 3-fold cross validation.

\noindent \textbf{Face Recognition: } 
Datasets used to train large scale face recognition models have natural imbalance. This is because the data is web-crawled, and images for some identities are easily available in abundance compared with others. For unconstrained face recognition, we train our model on VGG2 \cite{Cao18}, which is a large scale dataset with inherent class imbalance. We evaluate the trained network on four different datasets. These include two popular widely used benchmarks i.e., Labelled Faces in the Wild (LFW) \cite{LFWTechUpdate} and YouTube Faces (YTF) \cite{wolf2011face}. We further evaluate on Celebrities in Frontal Profile (CFP) \cite{cfp-paper} and Age Database (AgeDB) \cite{AgeDB}. %A brief description of the face datasets is given below. 

\noindent \textbf{VGG2:} facial image dataset \cite{Cao18} contains $3.31$ million images belonging to $8,631$ identities. The number of samples for each subject exhibit imbalance and vary from $80$ to $843$ with a mean of $362$. The data is collected from the Internet and has real-life variations in the form of ethnicites, head poses, illumination changes and age groups.

\noindent \textbf{LFW:} Labelled Faces in the Wild (LFW) \cite{LFWTechUpdate} contains $13,233$ static images of $5749$ individuals collected over the Internet in real-life situations. We follow the standard evaluation protocol `unrestricted with labeled outside data' \cite{LFWTechUpdate} and test on $6000$ pairs for face verification.

\noindent \textbf{YTF:} YouTube Faces (YTF) \cite{wolf2011face} has $3425$ videos belonging to $195$ different subjects. The length of video sequences varies between $48$ and $6070$ frames, with an average of $181.3$ frames. We follow the standard evaluation protocol for face verification on $5000$ video pairs.

\noindent \textbf{CFP:} contains $10$ frontal and $4$ profile view images for $500$ different identities \cite{cfp-paper}. Two evaluation protocols are used based upon the type of images in the gallery and probe: frontal-frontal (FF) and frontal-profile (FP). Each protocol has 10 runs, each with 700 face pairs (350 same and 350 different).

\noindent \textbf{AgeDB:} has $12,240$ images acquired in-the-wild for $440$ subjects \cite{AgeDB}. Along with variations across expression deformations, head poses and illumination conditions, a distinct feature of this dataset is the diversity across ages of the subjects, which ranges between $3$ and $101$ years, with an average of $49$ years. Test evaluation protocol has four groups with different age gaps (5, 10, 20 and 30 years). Each group contains ten splits, each having 600 face image pairs (300 same, 300 different). We use the most challenging split with 30 years gap.

 \noindent \textbf{Imbalanced MNIST:} Standard MNIST has $70,000$ handwritten images of digits (0-9), $60,000$ of these images are used for training (∼600/class) and the remaining 10,000 for testing (∼100/class). For this paper, we perform experiments on the standard evaluation split, as well as by systematically creating imbalance in the training set. For this, we reduce the even and odd digit samples to 10\% and 25\%. We further perform ablative study (Sec.~\ref{ablation}) by gradually introducing different imbalance ratios amongst classes and noise levels in the training labels.% and generating noisy labelled training set. 

%\noindent \textbf{Imbalanced CIFAR10:}  CIFAR-10/100 contains 60,000 images belonging to 10 and 100 classes (6000 and 600 images/class) respectively. The standard train/test split for each class is $\sim$83.3\%/16.7\% images. We evaluate our approach on the standard split as well as on artificially imbalanced splits. To imbalance the training distribution, we reduce
%the representation of even-numbered and odd-numbered classes to only 10\% and 5\% of images, respectively.

 %We evaluate our approach on the standard split as well as a artificially created imbalanced split. To imbalance the training distribution, we reduce the representation of even and odd digit classes to only 10\% and 25\% of images, respectively.

\subsection{Experimental Settings}

For experiments on DIL dataset, ResNet-18 backbone is used in combination with the proposed affinity loss.  For training the model to learn features for face verification tasks, we deploy Squeeze and Excitation (SE) networks \cite{hu2017squeeze} with ResNet-50 backbone and affinity loss. The face images are cropped and re-sized to $112\times112$ using multi-task cascaded Convolution Neural Network (CNN) \cite{7553523}. The model is trained using random horizontal flips as data augmentation.  The features extracted after the global pooling layer are then used for face verification evaluations on different datasets. The experiments on MNIST are performed on a simple network with four hidden layers having three convolution layers ($32$, $64$ and $128$ filters of $5\times5$), one fully connected layer ($128$ neurons), and an output layer. The model is trained with Stochastic Gradient Descent (SGD) optimizer with momentum and learning rate decay. For ablative study in Sec.~\ref{ablation}, we only change the output soft-max layer with the proposed Affinity loss layer and keep rest of the architecture fixed. %For experiments on CIFAR-10 dataset, we use an architecture similar to the one used for MNIST dataset. %For experiments on CIFAR-10 dataset, we use Wide-ResNet.

\subsection{Results and Analysis}

Table.~\ref{tab:DILexp} present our experimental results on DIL dataset. In Exp\#1, we report average performance for 3 fold cross validation on five classes (Actinic Keratosis, Basal Cell Carcinoma, Melanocytic Nevus, Squamous Cell Carcinoma and Seborrhoeic Keratosis). Compared with existing state of the art \cite{khan2017cost}, we achieve an absolute gain of $10.9\%$ on Exp\#1. For Exp\#2 on DIL dataset, all $10$ classes are considered. Evaluations on 3 fold cross validation in Table~\ref{tab:DILexp} show a significant performance improvement of $7.7\%$ for Exp\#2. Confusion matrix analysis for class-wise accuracy comparison in Fig~\ref{fig:conf_mat} shows that the performance boost is more pronounced for minority classes with lower representations. We attribute this to the capability of the proposed method to simultaneously optimize within class compactness by performing feature space clustering, and enhance inter-class separability by enforcing max-margin constraints. Our method achieves competitive performance on LFW and YTF datasets in Table~\ref{tab:lfw_ytf}. The performances on LFW and YTF are already saturated with many recent methods surpassing human-level results. The top performing methods on these datasets have been trained on much larger models with significantly more data and model parameters. Further evaluations on other facial recognition benchmarks achieve verification accuracies of 95.9\%, 99.5\% and 96.0\% on AgeDB30, CFP-FF and CFP-FP datasets respectively. These results prove the effectiveness of the proposed approach for large scale imbalanced learning. It is worth noting that our proposed Affinity loss does not require additional compute and memory and is easily scalable to larger datasets. This is in contrast to some of the existing loss formulations (such as triplet loss \cite{schroff2015facenet} and contrastive loss \cite{hadsell2006dimensionality}) which do enhance feature space discriminability, but suffer scalability to large data due to substantial possible combinations of training pairs, triplets or quintuplets.

\begin{table}
\centering
\scalebox{.76}{
\begin{tabular}{@{}lcc@{}}
  \toprule[0.4mm]
   \textbf{Methods} (using stand. split) &  \multicolumn{2}{c}{\textbf{Performances}}  \\
  \midrule
%  Conv DBN \cite{lee2009convolutional} & \multicolumn{2}{c}{99.2\%}  \\
%  Deep learning via embedding \cite{weston2012deep} & \multicolumn{2}{c}{98.5\%} \\
  Deeply Supervised Nets \cite{lee2015deeply} & \multicolumn{2}{c}{{99.6\%}}\\
  Generalized Pooling Func. \cite{lee2016generalizing} & \multicolumn{2}{c}{99.7\%} \\
  Maxout NIN \cite{chang2015batch} & \multicolumn{2}{c}{\textbf{99.8\%}} \\
  \midrule
  \textbf{Imbalanced} ($\downarrow$) &  CoSen CNN \cite{khan2017cost} & Affinity Loss \\
  \midrule
  Stand. split &  {99.3\%} & \textbf{99.6\%}\\ 
  %\midrule 
   10\% of  odd digits   & {98.6\%} & \textbf{99.3\%} \\
  %\hline
 % Low rep. (10\%) of  all digits &  97.2\%  & 97.6\%\\
 % \hline
   10\%  of  even digits    & {98.4\%} & \textbf{99.3\%}\\
  %\midrule 
   25\%  of  odd digits  & {98.9\%}  & \textbf{99.4\%} \\
  %\hline
%  Low rep. (25\%) of  all digits &   & \\
 % \hline
  25\% of  even digits  & {98.5\%} & \textbf{99.5\%} \\
  \bottomrule[0.4mm]
  \vspace{0.1mm}
\end{tabular}}
\caption{Evaluations on Imbalanced MNIST Database.}\vspace{-0.2cm}
\label{tab:MNISTexp}
\end{table}
%--------------------------------------------------------------

%---------------------------------------------------------------

\begin{table}
\centering
\scalebox{.85}{
\begin{tabular}{@{}lcc@{}}
  \toprule[0.4mm]
  \textbf{Methods}  &  \multicolumn{2}{c}{\textbf{Performances}}  \\
  %\cline{2-3}
  (using stand. split) & Exp\#1 (5-classes) & Exp\#2 (10-classes) \\
  \midrule
  Hierarchical-KNN \cite{ballerini2013color} &{74.3 $\pm$ 2.5\%}  & 68.8 $\pm$ 2.0\%\\
  Hierarchical-Bayes \cite{ballerini2012non} & {69.6 $\pm$ 0.4\%} & 63.1 $\pm$ 0.6\% \\
  Flat-KNN \cite{ballerini2013color} & {69.8 $\pm$ 1.6\%} & 64.0 $\pm$ 1.3\%\\
  \midrule
 
  CoSen CNN \cite{khan2017cost}  & \sbest{80.2 $\pm$ 2.5\% }& \sbest{72.6 $\pm$ 1.6\%}  \\
 Affinity Loss & \best{91.1 $\pm$ 1.7\%} & \best{80.3  $\pm$ 2.1\%} \\ 
  \bottomrule[0.4mm]
  \vspace{0.1mm}
\end{tabular}
}
\caption{Evaluation on DIL Database.}\vspace{-0.2cm}
\label{tab:DILexp}
\end{table}
%---------------------------------------------------------------

\begin{figure}
    \centering
    \begin{tabular}{cc}
    \includegraphics[width=.5\linewidth,keepaspectratio=True,trim=0em 0em 4em 2em, clip=true]{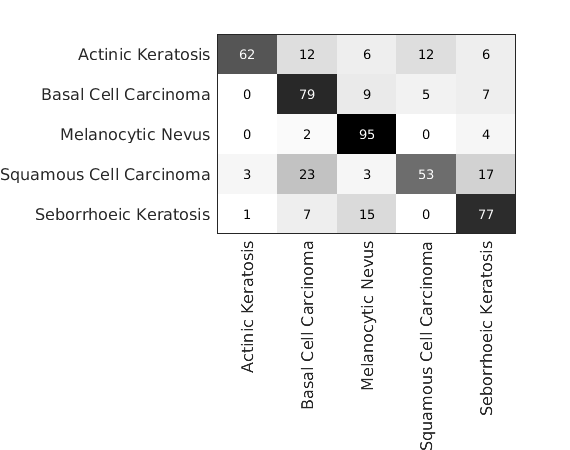}&
    \includegraphics[width=.5\linewidth,keepaspectratio=True,trim=0em 0em 4em 2em, clip=true]{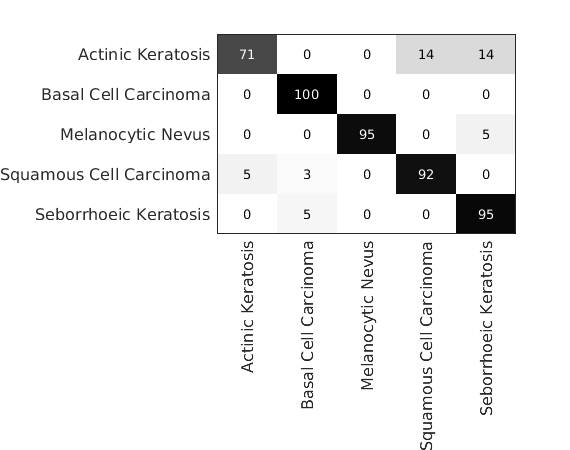}  \\
    {\footnotesize (a) CosSen CNN \cite{khan2017cost}} & {\footnotesize (b) Affinity Loss }
    \end{tabular}
    \caption{Confusion matrices for Exp\#1 on DIL dataset. }
    \label{fig:conf_mat}
\end{figure}

% Please add the following required packages to your document preamble:
% \usepackage{multirow}
\begin{table}[]
\centering
 \resizebox{.99\columnwidth}{!}{
\begin{tabular}{l c c c c}
\toprule[0.4mm]
\textbf{Methods}  &  \#\textbf{Models}  &  \textbf{Train Data}  & \textbf{LFW}             & \textbf{YTF} \\
  \midrule
DeepFace \cite{taigman2014deepface}   &  3  &  4M  & 97.35 & 91.4           \\
FaceNet \cite{schroff2015facenet}     &  1  & 200M & 99.63 & 95.4           \\
Web-scale \cite{taigman2015web}       &  4  & 4.5M & 98.37 & -              \\
VGG Face \cite{parkhi2015deep}        &  1  & 2.6M & 98.95 & 97.3           \\
DeepID2+ \cite{sun2015deeply}         &  25 & 0.3M & 99.47 & 93.2           \\
Baidu  \cite{liu2015targeting}        &  1  & 1.3M & 99.13 & -              \\
Center Face \cite{wen2016discriminative} & 1 & 0.7M & 99.28 & 94.9           \\
Marginal Loss \cite{deng2017marginal} &  1  & 4M & 99.48 & 95.98          \\
Noisy Softmax \cite{chen2017noisy}  &  1  & Ext. WebFace & 99.18  & 94.88          \\
%\midrule 
Range Loss  \cite{zhang2017range}   &  1  &  1.5M & 99.52 & 93.7           \\
Augmentation \cite{masi2016we}      &  1  & WebFace & 98.06  & -              \\
Center Invariant Loss   \cite{wu2017deep}  &  1  & WebFace & 99.12 & 93.88   \\
Feature transfer  \cite{yin2018feature}    &  1  & 4.8M    & 99.37 & -   \\
%\midrule
Softmax+Contrastive \cite{sun2014deep} &  1   & WebFace   & 98.78  & 93.5           \\
Triplet Loss  \cite{schroff2015facenet} & 1   & WebFace  & 98.7  & 93.4           \\
Large Margin Softmax  \cite{liu2016large} & 1 & WebFace  & 99.10 & 94.0 \\
Center Loss  \cite{wen2016discriminative}   &  1  & WebFace & 99.05  & 94.4   \\
SphereFace  \cite{liu2017sphereface}        &  1  & WebFace & 99.42  & 95.0  \\
CosFace  \cite{Wang_2018_CVPR}   & 1  & WebFace  & 99.33    & 96.1           \\
LMLE  \cite{huang2016learning} &   1   & WebFace  & {99.51}    & 95.8          \\
%CLMLE \cite{} & 1 & WebFace & 99.62 & 96.5 \\
%\midrule
Affinity Loss   &  1  & VGG2 & \textbf{99.65}    &  \textbf{97.3}              \\
\bottomrule[0.4mm]
\vspace{0.1mm}
\end{tabular}}
\caption{Face Verification Performance on LFW and YTF datasets.}\vspace{-0.2cm}
\label{tab:lfw_ytf}
\end{table}

% \begin{table}[]
% \centering
% \scalebox{.95}{
% \begin{tabular}{llll}
%  \toprule[0.4mm]
%  Dataset & Performance &Dataset & Performance \\
%  \midrule
% AgeDB  & 95.9  & CPLFW  &       \\
% CFP-FF & 99.63 & CFP-FP & 94.01 \\
% \bottomrule

% \end{tabular}
% }
% \caption{Evaluation Face datasets.}\vspace{-0.2cm}
% \label{tab:face}
% \end{table}

\begin{figure}
    \centering
    \includegraphics[width=\columnwidth, trim=0.5em 0.5em 0.5em 0.5em, clip=true]{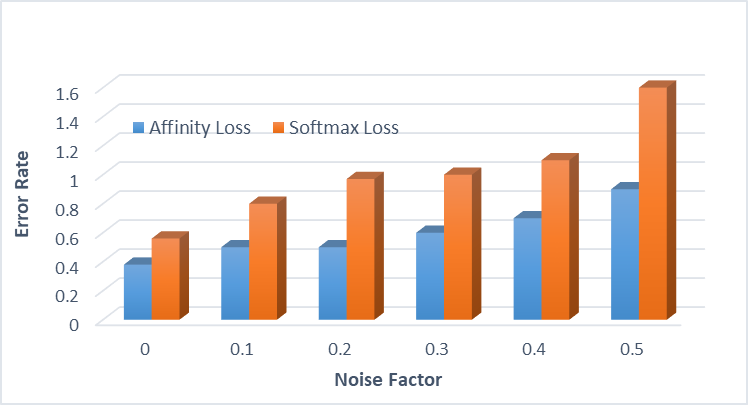}
    \caption{The effect of label noise on the soft-max and affinity loss functions. }
    \label{fig:noise}
\end{figure}

\subsection{Generalization} %\todo{Show performance by reducing the training set; or by gradually increasing the class imbalance ratio. See 4.2.2 of \cite{elsayed2018large} }
To test the generalization of the proposed method for different imbalance levels, we gradually reduce the training set by changing the representation of the minority class samples on MNIST data. Specifically, we gradually alter the majority to minority class ratios (up-to $1:0.025$) by randomly dropping samples of the first five digits ($0-4$). Under these settings, we therefore have significantly lower representation for half of the classes. The experimental results in terms of error rates against fraction of retained minority class samples are shown in Fig.~\ref{fig:ratios_minority}. We also repeat these experiments for standard soft-max loss. The comparison in Fig.~\ref{fig:ratios_minority} demonstrates a consistently superior performance of the proposed loss function across all settings. The effect on achieved performance is more noticeable for larger imbalance levels between majority and minority classes. The proposed Affinity loss enhances inter-class separability irrespective of the class frequencies by enforcing margin maximization constraints. Soft-max loss does not have inherent margin learning characteristics. Further, compared with soft-max loss, where intra-class variations can vary across classes depending upon their representative samples, affinity loss learns uniformed sized clusters.
%in the feature space for all classes, which results in improved generalization. 
As visualized in Fig.~\ref{fig: vis}, feature space within class disparities are observed for soft-max loss with minority classes occupying compact regions compared with their majority counterparts. In comparison, our proposed loss formulation is flexible, and allows learnt class prototypes to be equi-spaced and form uniformly shaped clusters. This reduces bias towards the less frequent cases and enhances the overall generalization capabilities, thus yielding a more discriminatively learnt feature space and an improved performance.%, With our proposed affinity loss, we enforce each class to occupy uniform sized regions in the feature space, which reduces the bias towards less frequent classes.
%t0-be-revised-This helps reduce intra-class variations.   Our proposed formulation is quite flexible and allows enforcing constraints on the learnt margins by performing feature space clustering with uniform sized cluster centers.The max-margin constraints further ensure the classes are optimally separated from each other irrespective of their number of samples.

\begin{figure}
\centering
\includegraphics[width=0.8\linewidth,keepaspectratio=True,trim=1em 1.5em 1em 1em, clip=true]{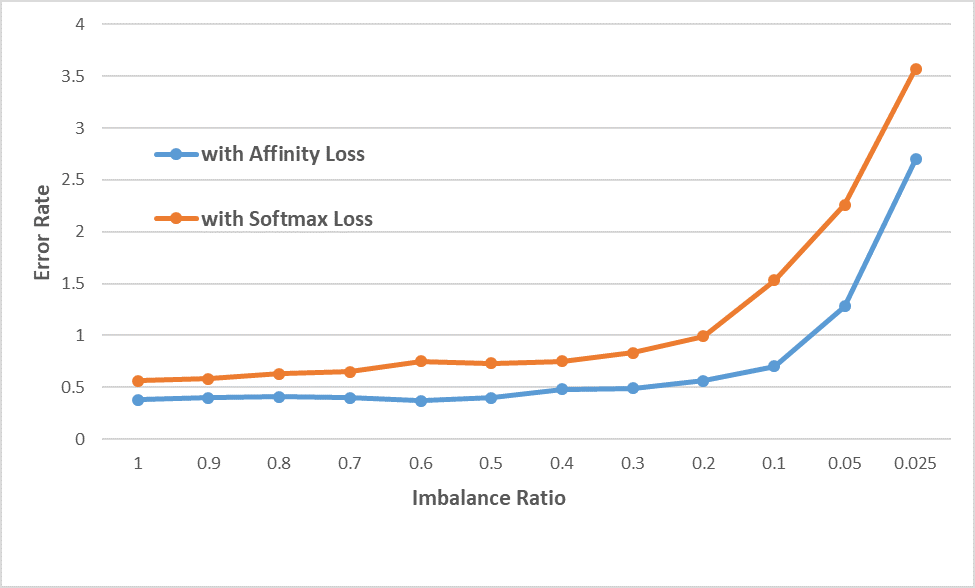}
\caption{Robustness analysis against different imbalance levels (fraction of retained minority class samples)}
\label{fig:ratios_minority}
\end{figure}

\subsection{Robustness against Noisy Labels} \label{Robustness against Noisy Labels} For many real-world applications, the acquired data has noisy labels, and generalization of the learning methods against label noise is highly desirable \cite{sheng2008get,vinyals2016matching,huang2018discriminative}. To check the robustness of our proposed approach against noisy labels in the training data, we randomly flip the classes of MNIST training samples. The fraction of the miss-labelled samples is gradually increased from $10\%$ to $50\%$ with an increment of $10\%$. In order to avoid over-fitting on the noisy data, we deploy early stopping \cite{yao2007early}, and finish training when the performance on a held-out cross validation set starts to degrade. For comparison, we repeat all experiments using standard soft-max loss. The experimental results in Fig.~\ref{fig:noise} show that that the proposed Affinity loss performs better across the entire range of different noise levels. Although, the performance for both soft-max and affinity losses degrades with increasing noise factors, the proposed affinity loss shows more robustness, specially for larger noise ratios, with comparatively less performance degradation. The multi-centered learning in our loss provides flexibility to the noisy samples to associate themselves with class prototypes which are different from the non-noisy and clean samples. %This is useful in learning from noisy data. 

% \subsection{Confusion Matrix Analysis} \todo{Show that the performance for minority classes improves, compared with baseline}

% \begin{figure}
%     \centering
%     \begin{subfigure}  
%     \centering
%     \includegraphics[width=1\linewidth,keepaspectratio=True]{figs/tnnls.png}
%     \caption{CosSen CNN \cite{khan2017cost}}
%     \end{subfigure}
%     \begin{subfigure} 
%     \centering
%     \includegraphics[width=1\linewidth,keepaspectratio=True]{figs/ours.png}  
%     \caption{Affinity Loss}
%     \end{subfigure}
%     \caption{Confusion matrices for Exp-1 on DIL dataset. }
%     \label{fig:intro_fig}
% \end{figure}

% \begin{figure}%
%     \centering
%     \subfloat[label 1]{{\includegraphics[width=.49\linewidth,keepaspectratio=True]{figs/tnnls.png} }}%
 
%     \subfloat[label 1]{{\includegraphics[width=.49\linewidth,keepaspectratio=True]{figs/ours.png} }}%
%     \caption{2 Figures side by side}%
%     \label{fig:example}%
% \end{figure}

\begin{SCfigure}
    \centering
    \includegraphics[width=.5\linewidth, trim=1em 0.5em 1em 0.5em, clip=true]{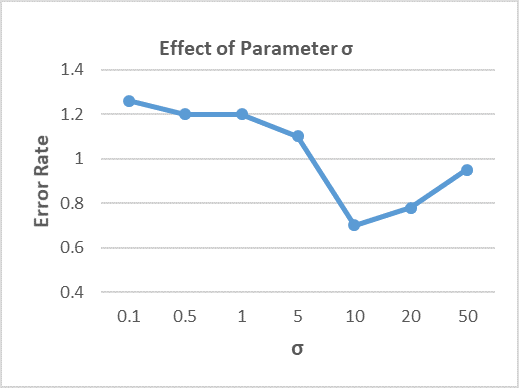}
    \caption{Effect of changing parameter $\sigma$ that controls the spread of clusters. Results on imbalanced MNIST show that increasing the cluster variance above a certain point results in overlapped clusters and higher error rate. }
    \label{fig:sigma}
\end{SCfigure}

%*  Cosine similarity and dot products work better for high -dimensional spaces while Euclidean metrics may suffer from curse of dimensionality, any comments on that ...

\subsection{Ablation}
\label{ablation}
% \textbf{Distance Metrics:}
% * Does other metrics induced by norms e.g., Minkowski distance, Manhattan distance also work better. Also, fractional distance metrics are proposed to avoid curse of dimensionality in "On the surprising bahevioru of distance metrics" (cited 1100+ times). Can it be useful here? 
% https://numerics.mathdotnet.com/distance.html

\noindent \textbf{Number of Cluster Centers:} A unique aspect of the proposed affinity loss is its multi-centered learning which provides us the flexibility to have multiple class prototypes for each class. Here, we perform experiments on the imbalanced MNIST dataset ($10\%$ representation for first five digits), by gradually changing the number of representative prototypes $m$ per class from $1$ to $20$. The experimental results in terms of error rates vs prototypes $m$ in Fig.~\ref{fig:clusters_differet} show that the best performance is achieved for $m \geq 5$. Fewer prototypes per class ($m \leq 5$) yield relatively poor performance. The proposed method performs consistently when prototypes are increased beyond $5$. Such multi-centered learning supports diversity in input samples. It is specifically helpful in scenarios with complex data distributions where large differences are observed amongst samples of the same class. Such diverse samples might not necessarily cluster around a single region, and could form multiple clusters by virtue of the proposed multi-centered learning mechanism. Furthermore, our experiments in Sec.~\ref{Robustness against Noisy Labels} show that by providing flexible class prototypes, multi-centered learning proves an effective and robust scheme against noisy samples.

\begin{figure}
    \centering
    \includegraphics[keepaspectratio=Ture,width=\linewidth,trim=1em 0.5em 1em 0.5em, clip=true]{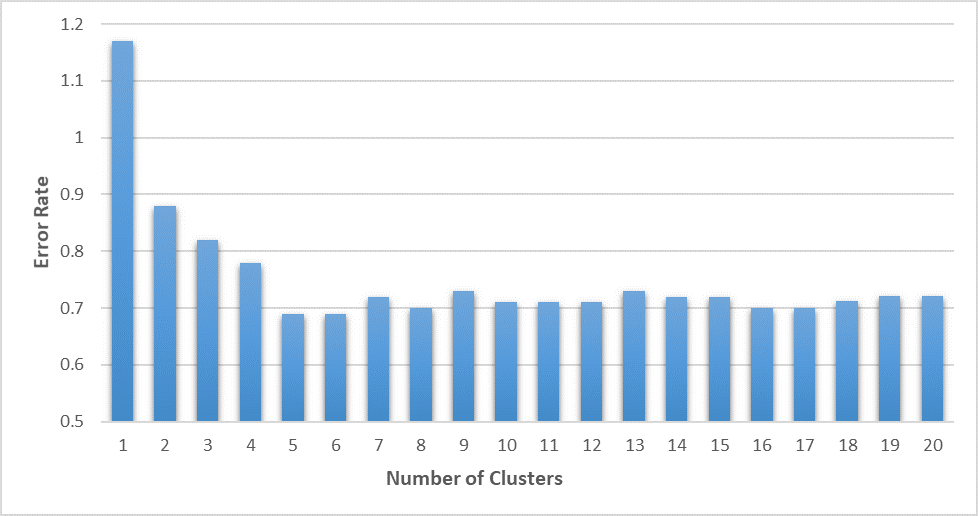}
    \caption{Performance for different number of clusters per class.}
    \label{fig:clusters_differet}
\end{figure}

%These experiments validate that the multi-centered learning provides improved performance. Further, it proves effective in learning from data with noisy labels. %These experiments validate the benefits of the multi-centered learning scheme adapted. 

\noindent \textbf{Cluster Spread $\sigma$:} The parameter $\sigma$ in Eq.~\ref{eq:gaussian} determines the cluster spread and helps achieve uniform intra-class variations. Our 2D visualization of the learnt features in Fig.~\ref{fig: vis} demonstrate that the clusters for each class are uniformly sized for both the majority and minority classes. This is in contrast to the traditional soft-max loss, where shrinked feature space regions are observed for the minority classes. For our proposed loss formulation, the size of the cluster is directly related with the value of parameter $\sigma$, with larger $\sigma$ indicating larger variance for a cluster. We perform experiment on imbalanced MNIST dataset for different values of the the parameter $\sigma=\{.1,.5,1,5,1e1,2e1,5e1\}$. The results in Fig.~\ref{fig:sigma} show that the optimal performance is achieved for values of $\sigma$  between $5$ and $20$. Very high values of $\sigma$ results in larger cluster spreads causing overlaps and confusion amongst classes and lower classification performance. %Lower values of $\sigma$ also yield poor performance, since this requires high variance in the activations of the preceding layer

\begin{table}[]
\centering
\scalebox{.95}{
\begin{tabular}{llc}
 \toprule[0.4mm]
 Distance $d$ & Similarity &Performance  \\
 \midrule
$||a-b||^1$ & $\exp{-(d/\sigma})$ & $99.3$\\
$||a-b||^2$ & $\exp{-(d/\sigma})$ & $99.3$ \\
$||a-b||^1$ & $1/(1+d)$ & $-$\\
$||a-b||^2$ & $1/(1+d)$ & $99.1$\\

\bottomrule

\end{tabular}}
\vspace{0.5em}
\caption{Evaluation with different combinations of distance and similarity measures.} 
\label{tab:comb_sim_dis}
\end{table}

\noindent \textbf{Distance and Similarity Metrics:} Our original affinity loss formulation in Eq.~\ref{eq:gaussian} first computes the squared $\ell_2$ distance between the feature $\textbf{f}$ and class prototype $\textbf{w}$, which is then converted to a similarity measure using the Gaussian metric. In this experiment, we evaluate different combinations of distance and similarity metrics. $\ell_1$ and $\ell_2$ metrics are used to compute distances, whereas Gaussian and inverse distance (defined by $\frac{1}{1+x}$) are the two similarity measures. We perform these experiments on imbalanced MNIST data (by retaining 10\% samples for first five digits). Table.~\ref{tab:comb_sim_dis} shows our evaluation results. The proposed scheme works well with all combinations except for $\ell_1$ distance and Gaussian similarity, where it fails to converge. The best performance is achieved for Gaussian similarity in combination with squared $\ell_2$ distance.

\section{Conclusion}

Class imbalance is ubiquitous in natural data  and learning from such data is an unresolved challenge. The paper proposed a flexible loss formulation, aimed at producing a generalizable large margin classifier, to tackle class imbalance learning using deep networks. Based upon Euclid space affinity defined using Gaussian similarity on Bregmen divergence, the proposed loss jointly performs feature space clustering and max-margin classification. It enables learning uniform sized equi-spaced clusters in the feature space, thus enhancing between class separability and reducing intra-class variations. The proposed scheme complements existing regularizer such as weight decays, and can be incorporated with different architectural backbones without incurring additional compute overhead. Experimental evaluations validate the effectiveness of the affinity loss for face verification and image classification benchmarks involoving imbalanced data. 
% \subsection{Discussion} 

{\small
\bibliographystyle{ieee}
\bibliography{egbib}
}

\end{document}